
\documentclass[letterpaper]{article}
\usepackage{uai2020}
\usepackage[margin=1in]{geometry}

\usepackage{times}

\usepackage{graphicx}

\usepackage{pgfplots}

\title{Complex Markov Logic Networks: Expressivity and Liftability}

\author{ 
{\bf Ond\v{r}ej Ku\v{z}elka} \\
Department of Computer Science \\
FEE CTU in Prague\\
Czech Republic
}

%

\usepackage{xcolor}
\usepackage{amsmath}
\usepackage{amsthm}
\usepackage{amssymb}
\usepackage{amsfonts}
\usepackage{dsfont}

\newtheorem{example}{Example}
\newtheorem{theorem}{Theorem}
\newtheorem{lemma}{Lemma}
\newtheorem{remark}{Remark}
\newtheorem{proposition}{Proposition}

\newtheorem{definition}{Definition}

\begin{document}

\maketitle

\begin{abstract}
We study expressivity of Markov logic networks (MLNs). We introduce {\em complex MLNs}, which use complex-valued weights, and we show that, unlike standard MLNs with real-valued weights, complex MLNs are {\em fully expressive}. We then observe that discrete Fourier transform can be computed using weighted first order model counting (WFOMC) with complex weights and use this observation to design an algorithm for computing {\em relational marginal polytopes} which needs substantially less calls to a WFOMC oracle than a recent algorithm.
\end{abstract}

\section{INTRODUCTION}

Statistical Relational Learning \cite{getoor2007introduction} (SRL) is concerned with learning probabilistic models from relational data such as, for instance, knowledge graphs, biological or social networks, structures of molecules etc. Markov Logic Networks \cite{Richardson2006} (MLNs) are among the most prominent SRL systems and in this paper we are interested in their {\em expressivity}.

Informally, {\em expressivity} measures the ``amount'' of distributions that can be modelled by a given class of probabilistic models. An MLN is given by a set of weighted first-order logic formulas and it defines a distribution on possible worlds over a given domain. Here we study expressivity of MLNs in a setting where we first fix the first-order logic formulas defining the MLN and then vary their weights. Since it is not even clear what {\em expressivity} should mean in this context, our first contribution in this paper is a formal framework for studying expressivity of MLNs.

The main reason for studying expressivity of MLNs in the setting where one first fixes the formulas is computational complexity of inference because its complexity usually depends mostly on the formulas and not so much on their weights.\footnote{Of course, the complexity of inference also depends on the length of the bit-representation of the weights.} This is studied in the area of SRL known as {\em lifted inference} \cite{DBLP:conf/ijcai/Poole03,braz2005lifted,van2011lifted,DBLP:conf/uai/GogateD11a}. Specifically, there are classes of MLNs for which certain inference problems can be performed in time polynomial in the size of the domain. Such classes are called {\em domain-liftable} and a prominent example are MLNs in which every first-order logic formula contains at most 2 logical variables~\cite{van2011lifted}. It is desirable to be able to represent as many distributions as possible using these restricted classes of MLNs. Motivated by the observation that most MLNs are not {\em fully expressive}, as our second contribution, we introduce {\em complex MLNs} ($\mathbb{C}$-MLNs), which use complex-valued weights, and show that they are fully expressive. This is in line with a recent work of Buchman and Poole \cite{DBLP:conf/uai/BuchmanP17} who introduced complex-valued weights into probabilistic logic programs in order to increase their expressivity (although the precise notion of expressivity used by them differs from our work).

Allowing complex-valued weights turns out to be useful for yet another reason. It allows us to compute discrete Fourier transform using weighted first order model counting. This, in turn, leads to our final contribution in this paper, which is an algorithm for computing {\em relational marginal polytopes} using a WFOMC oracle that needs substantially less oracle calls than a recent relational marginal polytope construction algorithm \cite{kuzelka.aistats.2020}.

\section{BACKGROUND}

In this section we provide the necessary background.


\subsection{DISCRETE FOURIER TRANSFORM}\label{sec:fourier}

Here we review the basic properties of {\em multi-dimensional Fourier transform} (DFT). Let $d$ be a positive integer and let $\mathbf{N} = [N_1, \dots, N_d] \in (\mathbf{N} \setminus \{ 0 \})^d$ be a vector of positive integers. Let us define $\mathcal{J} = \{0,1,\dots, N_1 - 1 \} \times \{0,1,\dots, N_2 - 1 \} \times \dots \times \{0,1,\dots, N_d - 1 \}$. Let $f : \mathcal{J} \rightarrow \mathbb{C}$ be a function defined on $\mathcal{J}$. Then the DFT of $f$ is the function $g : \mathcal{J} \rightarrow \mathbb{C}$ defined as
\begin{equation}\label{eq:dft}
    g(\mathbf{k}) = \sum_{\mathbf{n} \in \mathcal{J}} f(\mathbf{n})  e^{ - i 2 \pi \langle \mathbf{k}, \mathbf{n} / \mathbf{N} \rangle }
\end{equation}
where $\mathbf{k}/\mathbf{N} \stackrel{def}{=} \left[ [\mathbf{k}]_1/N_1, [\mathbf{k}]_2/N_2, \dots, [\mathbf{k}]_d/N_d \right]$ (i.e.\ ``/'' denotes component-wise division). We use the notation $g = \mathcal{F}\left\{f\right\}$. The inverse transform is then given as
\begin{equation}\label{eq:idft}
    f(\mathbf{n}) = \frac{1}{\prod_{l=1}^d N_l} \sum_{\mathbf{k} \in \mathcal{J}} g(\mathbf{k})  e^{ i 2 \pi \langle \mathbf{n}, \mathbf{k} / \mathbf{N} \rangle}.
\end{equation}
It holds $f = \mathcal{F}^{-1} \left\{ \mathcal{F}\left\{ f \right\} \right\}$.

We will need the DFT of the Kronecker delta function $\delta(\mathbf{n})$. Kronecker delta $\delta(\mathbf{n})$ is equal to $1$ when $\mathbf{n} = \mathbf{0}$ (here $\mathbf{0}$ is the zero vector) and $0$ otherwise. Its DFT has a conveniently simple form: $g(\mathbf{k}) = 1$. The DFT of the shifted Kronecker delta function $\delta(\mathbf{n}-\mathbf{n}_0)$ is $g(\mathbf{k}) = e^{ - i 2 \pi \langle \mathbf{k}, \mathbf{n}_0 / \mathbf{N} \rangle }$.


\subsection{FIRST ORDER LOGIC}

We assume a function-free first-order language defined by a set of constants $\Delta$, a set of variables $\mathcal{V}$ and a set of predicates (relations) $\mathcal{R}$. Variables start with lowercase letters and constants start with uppercase letters. An atom is $r(a_1,...,a_k)$ with $a_1,...,a_k\in \Delta \cup \mathcal{V}$ and $r\in \mathcal{R}$. A literal is an atom or its negation. For a first-order logic formula $\alpha$, we define $\textit{vars}(\alpha)$ to be the set of variables contained in it. 
A first-order logic formula in which none of the literals contains any variables is called {\em ground}. A possible world $\omega$ is represented as a set of ground atoms that are true in $\omega$. The satisfaction relation $\models$ is defined in the usual way: $\omega \models \alpha$ means that the formula $\alpha$ is true in $\omega$.

\subsection{MARKOV LOGIC NETWORKS}

A Markov logic network \cite{Richardson2006} (MLN) is a set of weighted first-order logic formulas $(\alpha,w)$, where $w\in \mathbb{R}$ and $\alpha$ is a function-free first-order logic formula. The semantics are defined w.r.t.\ the groundings of the first-order formulas, relative to some finite set of constants $\Delta$, called the domain. An MLN $\Phi$ induces the probability distribution over possible worlds $\omega \in \Omega$: 
\begin{equation}\label{eq:mln}
    P_{\Phi,\Omega}(\omega) = \frac{1}{Z} \exp \left(\sum_{(\alpha,w) \in \Phi} w \cdot N(\alpha,\omega)\right),
\end{equation}
where $N(\alpha, \omega)$ is the number of groundings of $\alpha$ satisfied in $\omega$ (when $\alpha$ does not contain any variables, we define $N(\alpha,\omega) = \mathds{1}(\omega \models \alpha)$), and $Z$, called {\em partition function}, is a normalization constant to ensure that $p_{\Phi}$ is a  probability distribution.


\subsection{WEIGHTED MODEL COUNTING}\label{sec:wfomc}

Computation of the partition function $Z$ of an MLN can be converted to a {\em first-order weighted model counting problem (WFOMC)}.

\begin{definition}[WFOMC \cite{van2011lifted}]
Let $w(P)$ and $\overline{w}(P)$ be functions from predicates to complex\footnote{Normally, in the literature, the weights of predicates are real numbers. However, we will need complex-valued weights in this paper, therefore we define the WFOMC problem accordingly using complex-valued weights.} numbers (we call $w$ and $\overline{w}$ {\em weight functions}) and let $\Gamma$ be a first-order theory. Then $\operatorname{WFOMC}(\Gamma,w,\overline{w}) =$
$$
     = \sum_{\omega \in \Omega : \omega \models \Gamma} \prod_{a \in \mathcal{P}(\omega)} w(\textit{Pred}(a)) \prod_{a \in \mathcal{N}(\omega)} \overline{w}(\textit{Pred}(a))
$$
where $\mathcal{P}(\omega)$ and $\mathcal{N}(\omega)$ denote the positive literals that are true and false in $\omega$, respectively, and $\textit{Pred}(a)$ denotes the predicate of $a$ (e.g. $\textit{Pred}(\textit{friends}(\textit{Alice},\textit{Bob})) = \textit{friends}$).
\end{definition}

To compute the partition function $Z$ using weighted model counting, we may proceed as in \cite{van2011lifted}. Let an MLN $\Phi = \{(\alpha_1,w_1),\dots,(\alpha_m,w_m) \}$ be given. 
For every weighted formula $(\alpha_i,w_i) \in \Phi$, where the free variables in $\alpha_i$ are exactly $x_1$, $\dots$, $x_k$, we create a new formula 
$$
    \forall x_1,\dots,x_k : \xi_i(x_1,\dots,x_k) \Leftrightarrow \alpha_i(x_1,\dots,x_k)
$$
where $\xi$ is a new fresh predicate. We denote the resulting set of new formulas $\Gamma$. Then we set
$w(\xi_i) = \exp{\left(w_i \right)}$
and $\overline{w}(\xi_i) = 1$ and for all other predicates we set both $w$ and $\overline{w}$ equal to 1. It is easy to check that then $\mathbf{WFOMC}(\Gamma,w,\overline{w}) = Z$, which is what we needed to compute. To compute the marginal probability of a given query $q$, we have $\textit{Pr}_{\Phi,\Omega}[q] = \frac{\mathbf{WFOMC}(\Gamma \cup \{ q \}, w, \overline{w})}{\mathbf{WFOMC}(\Gamma, w, \overline{w})}$.



\subsection{DOMAIN-LIFTED INFERENCE}

Importantly, there are classes of first-order logic theories for which weighted model counting is polynomial-time. In particular, as shown in \cite{van2014skolemization}, when the theory $\Gamma$ consists only of first-order logic sentences, each of which contains at most two logic variables, the weighted model count can be computed in time polynomial in the number of elements in the domain $\Delta$ over which the set of possible worlds $\Omega$ is defined. It follows from the translation described in the previous section that this also means that computing the partition function of $2$-variable MLNs can be done in time polynomial in the size of the domain. This is not the case in general when the number of variables in the formulas is greater than two unless P = \#P$_1$~\cite{beame2015symmetric}.\footnote{\#P$_1$ is the set of \#P problems over a unary alphabet.} Within statistical relational learning, the term used for problems that have such polynomial-time algorithms is {\em domain liftability}.

\begin{definition}[Domain liftability]
An algorithm for computing WFOMC with real weights is said to be domain-liftable if it runs in time polynomial in the size of the domain.
\end{definition}

In this work we will also need {\it domain liftability over $\mathbb{C}$} which differs from the classical definition by allowing complex-valued weight functions $w(.)$ and $\overline{w}(.)$.

\begin{definition}[Domain liftability over $\mathbb{C}$]
An algorithm for computing WFOMC with complex-valued weight functions $w(.)$ and $\overline{w}(.)$ is said to be domain-liftable over $\mathbb{C}$ if it runs in time polynomial in the size of the domain.
\end{definition}

One can show, by inspecting the respective domain-lifted algorithms from the literature (e.g.\ \cite{van2011lifted,van2014skolemization,beame2015symmetric}) that these algorithms can be modified to yield domain-lifted algorithms over $\mathbb{C}$ (we discuss this in a bit more detail in Section \ref{sec:numerics}). 


\section{EXPRESSIVITY OF MLNS}

In this section we lay down the framework that we need in order to be able to talk about expressivity of MLNs.

\subsection{A MOTIVATING EXAMPLE}\label{sec:motivating_example}

\begin{figure}[t]
\begin{tikzpicture}
[
    declare function={binom(\k,\n,\p)=\n!/(\k!*(\n-\k)!)*\p^\k*(1-\p)^(\n-\k);},
    declare function={binomm(\kk,\nn,\pp)=((-1)^\kk + 1)*(\nn!/(\kk!*(\nn-\kk)!)*\pp^\kk*(1-\pp)^(\nn-\kk));}
]
\begin{axis}[
    samples at={0,...,60},
    yticklabel style={
        /pgf/number format/fixed,
        /pgf/number format/fixed zerofill,
        /pgf/number format/precision=1
    },
    legend pos=north west
]
\addplot [only marks, cyan] {binom(x,60,exp(-1)/(exp(-1)+1))}; \addlegendentry{$w=-1$}
\addplot [only marks, orange] {binom(x,60,exp(0)/(exp(0)+1))}; \addlegendentry{$w=0$}
\addplot [only marks, blue] {binom(x,60,exp(1)/(exp(1)+1))}; \addlegendentry{$w=1$}
\addplot [only marks, black] {binomm(x,60,exp(0)/(exp(0)+1))}; \addlegendentry{$\mathbf{w}=[0,\pi \cdot i]$}
\end{axis}
\end{tikzpicture}
\caption{Illustration of count distributions induced by three MLNs $\Phi_1 = \{ (\textit{heads}(x), -1) \}$ (cyan), $\Phi_2 = \{ (\textit{heads}(x), 0) \}$ (orange) and $\Phi_3 = \{ (\textit{heads}(x), 1) \}$ (blue) and a {\em complex} MLN $\Phi_\mathbb{C} = \{ (\textit{heads}(x), [0, \pi \cdot i]) \}$ (black) over a domain of size 60.}\label{fig1}
\end{figure}
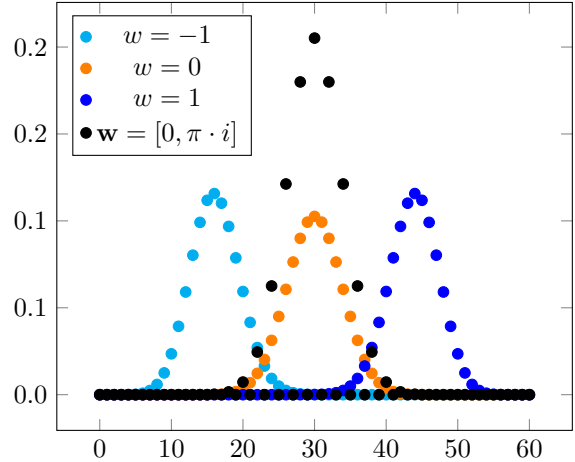

Let us consider an MLN $\Phi$, consisting of a single formula $\alpha = \textit{heads}(x)$ with weight $w$ on a domain $\Delta$. Which distributions can $\Phi$ model? To answer this question, let $O$ be the random variable, taking values in the set of possible worlds $\Omega$, sampled from the distribution given by $\Phi$. 
It turns out that this $\Phi$ can only model distributions for which the random variable $N(\alpha,O)$ is distributed as a binomial random variable, which is quite restrictive.

There are certainly limits as to which distributions we can reasonably expect to be able to represent with the MLN $\Phi$. On the one hand, we cannot expect the representable distributions to allow us to assign different probabilities to two possible worlds $\omega$ and $\omega'$ such that $N(\alpha,\omega) = N(\alpha,\omega')$. On the other hand, that does not yet mean that $N(\alpha,O)$ should be distributed as a binomial random variable. For instance, we might want $N(\alpha,O)$ to be distributed uniformly over the interval $[0;|\Delta|]$. 
We will show in this paper that this is indeed possible and that we can represent any such distribution using MLNs if we allow weights of formulas to be complex numbers (more precisely vectors of complex numbers). To illustrate this, in Figure \ref{fig1}, we show count distributions of three MLNs: $\Phi_1 = \{ (\textit{heads}(x), -1) \}$, $\Phi_2 = \{ (\textit{heads}(x), 0) \}$ and $\Phi_3 = \{ (\textit{heads}(x), 1) \}$ and a {\em complex} MLN $\Phi_\mathbb{C} = \{ (\textit{heads}(x), [0, \pi \cdot i]) \}$. Although we have not yet introduced complex MLNs formally, this example gives heads up for expressivity of complex MLNs, as the count distribution shown in Figure \ref{fig1} for $\Phi_\mathbb{C}$ is clearly not a binomial distribution (notice that it is zero for all odd numbers).

\subsection{MEASURING EXPRESSIVITY: SETUP}

We need to set up a language that we will use in this paper to talk about expressivity of MLNs. Given an MLN $\Phi = \{(\alpha_1,w_1), \dots, (\alpha_m,w_m) \}$ and a domain $\Delta$, we introduce the vectors of the count-statistics: 
$$\mathbf{N}(\Phi,\omega) \stackrel{def}{=} [N(\alpha_1,\omega),\dots,N(\alpha_m,\omega)].$$ 
We are interested in the distribution of the random vector-valued variable $\mathbf{N}(\Phi,O)$ where $O$ is sampled from the MLN $\Phi$. We call this distribution {\it count distribution} of the MLN $\Phi$.

\begin{definition}[Count Distribution]
Let $\Phi = \{(\alpha_1,w_1), \dots, (\alpha_m,w_m) \}$ be an MLN defining a distribution over a set of possible worlds $\Omega$. The count distribution of $\Phi$ is the distribution of $d$-dimensional vectors of non-negative integers $\mathbf{n}$ given by
$$q_\Phi(\mathbf{n},\Omega) =  \sum_{\omega \in \Omega : \mathbf{N}(\Phi,\omega) = \mathbf{n}} p_{\Phi,\Omega}(\omega)$$
where $p_{\Phi,\Omega}$ is the distribution given by the MLN $\Phi$.
\end{definition}

\begin{remark}
If $\mathbf{N}(\Phi,\omega) = \mathbf{N}(\Phi,\omega')$ then necessarily $P_{\Phi,\Omega}(\omega) = P_{\Phi,\Omega}(\omega')$. It follows that we do not lose any information by focusing on the respective count distributions. 
\end{remark}

Another important concept, which we need, is the {\it support} of a set of formula.

\begin{definition}[Support]
Let $\Omega$ be a set of possible worlds and $\Psi = \{\alpha_1,\dots,\alpha_m\}$ a set of first-order logic formulas. 
We define the support of $\Psi$ on $\Omega$ to be the set
$$\operatorname{Supp}(\Psi,\Omega) = \{ \mathbf{N}(\Psi,\omega) | \omega \in \Omega \}.$$
\end{definition}

After rescaling, the convex hull of $\operatorname{Supp}(\Psi,\Omega)$ is equal to the {\em relational marginal polytope} of $\Psi$ \cite{kuzelka2018relational}. 

\subsection{FULL EXPRESSIVITY}\label{sec:full_expr_def}

Now we can finally describe in detail what we will mean by full expressivity of MLNs. 

\begin{definition}[(Almost) Full Expressivity]
Let $\Omega$ be a set of possible worlds and $\Psi = \{\alpha_1,\dots,\alpha_m \}$ be a set of first-order logic formulas. We say that a class of MLNs given by $\Psi$ is (almost\footnote{Here, the term {\em almost} is used in measure-theoretic sense.}) fully expressive if the following holds: For (almost) any distribution $Q$ on $\operatorname{Supp}(\Psi,\Omega)$ there exists an MLN $\Phi$ such that its count distribution $q_{\Phi,\Omega}$ is equal to $Q$.
\end{definition}

\noindent It follows from the example in Section \ref{sec:motivating_example} that in general, when restricted to real-valued weights, MLNs are not fully expressive. However, as we show later in this paper, {\em complex MLNs} will turn out to be fully expressive.

We end this section with the following negative result. 

\begin{proposition}\label{prop:prop1}
Let $\Psi = \{\alpha_1,\dots,\alpha_m \}$ be a set of first-order logic formulas and $\Omega$ be a set of possible worlds and define $D = |\operatorname{Supp}(\Psi,\Omega)|$. Let $\mathbb{P}^{(D)}$ denote the $D$-dimensional probability simplex, i.e.\ $\mathbb{P}^{(D)} = \{(p_1,\dots, p_D) \in [0;1]^D | \sum_{i=1}^D p_i = 1 \}$. If $D > m+1$ then the set of count distributions representable by MLNs $\Phi$ of the form $\Phi = \{(\alpha_1,w_1),\dots,(\alpha_m,w_m) \}$, where $[w_1,\dots,w_m] \in \mathbb{R}^m$, has measure-zero as a subset of~$\mathbb{P}^{(D)}$.
\end{proposition}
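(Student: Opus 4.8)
The plan is to show that the set of count distributions representable by a real-valued MLN of the prescribed form is exactly the image of a single smooth map from the $m$-dimensional parameter space $\mathbb{R}^m$, and then invoke the elementary fact that a $C^1$ image of a lower-dimensional Euclidean space is a null set in the target.

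First I would fix notation. Enumerate $\operatorname{Supp}(\Psi,\Omega) = \{\mathbf{n}_1,\dots,\mathbf{n}_D\}$ and, for each $j \in \{1,\dots,D\}$, put $c_j = |\{\omega \in \Omega : \mathbf{N}(\Psi,\omega) = \mathbf{n}_j\}|$, which is a positive integer by definition of the support. From \eqref{eq:mln} and the definition of the count distribution, for $\Phi = \{(\alpha_1,w_1),\dots,(\alpha_m,w_m)\}$ with $\mathbf{w} = (w_1,\dots,w_m) \in \mathbb{R}^m$ one gets
$$ q_{\Phi,\Omega}(\mathbf{n}_j) \;=\; \frac{c_j\,\exp(\langle \mathbf{w},\mathbf{n}_j\rangle)}{\sum_{l=1}^D c_l\,\exp(\langle \mathbf{w},\mathbf{n}_l\rangle)}, \qquad j = 1,\dots,D. $$
Hence the set of representable count distributions is precisely the image $q(\mathbb{R}^m) \subseteq \mathbb{P}^{(D)}$ of the map $q\colon \mathbb{R}^m \to \mathbb{P}^{(D)}$ sending $\mathbf{w}$ to $(q_{\Phi,\Omega}(\mathbf{n}_1),\dots,q_{\Phi,\Omega}(\mathbf{n}_D))$. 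Since $\exp$ is entire and the denominator is a sum of strictly positive terms, $q$ is well-defined and real-analytic (in particular $C^1$) on all of $\mathbb{R}^m$.

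Next I would make the measure precise: $\mathbb{P}^{(D)}$ lies in the $(D-1)$-dimensional affine hyperplane $\{\sum_i p_i = 1\}$, and ``measure zero in $\mathbb{P}^{(D)}$'' means null with respect to the $(D-1)$-dimensional Lebesgue (equivalently Hausdorff $\mathcal{H}^{D-1}$) measure on that hyperplane — the natural uniform measure on the simplex. So the claim reduces to: $q(\mathbb{R}^m)$ is $\mathcal{H}^{D-1}$-null. Write $\mathbb{R}^m = \bigcup_{k} C_k$ as a countable union of closed unit cubes. On each compact $C_k$ the $C^1$ map $q$ is Lipschitz, say with constant $L_k$, so $\mathcal{H}^m(q(C_k)) \le L_k^m\, \mathcal{H}^m(C_k) < \infty$; and a set of finite $m$-dimensional Hausdorff measure has $\mathcal{H}^{s}$-measure zero for every $s > m$. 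Since $D - 1 > m$ by hypothesis, $\mathcal{H}^{D-1}(q(C_k)) = 0$ for every $k$, and countable subadditivity gives $\mathcal{H}^{D-1}(q(\mathbb{R}^m)) = 0$, as required. (Alternatively, extend $q$ to a $C^1$ self-map of $\mathbb{R}^{D-1}$ that ignores the last $D-1-m$ coordinates; its differential has rank at most $m < D-1$ everywhere, so every image point is a critical value and Sard's theorem yields the same conclusion.)

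There is no substantive obstacle: the argument is dimension counting once one observes the explicit $m$-parameter exponential-family form of the count distribution. The only points deserving care are (i) spelling out which measure on the $(D-1)$-dimensional set $\mathbb{P}^{(D)}$ is intended, so the statement is not vacuous, and (ii) citing the correct elementary fact — that locally Lipschitz (hence $C^1$) maps do not raise Hausdorff dimension — rather than attempting to describe the image set $q(\mathbb{R}^m)$ explicitly.
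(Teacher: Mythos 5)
Your proposal is correct, and its main argument is a genuinely more elementary route than the one the paper takes. The paper's (sketched) proof simply observes that the map $f\colon \mathbb{R}^m \to \mathbb{P}^{(D)}$ from weights to count probabilities is continuously differentiable — a fact it attributes to the form of (\ref{eq:mln}) without writing the map out — and then invokes Sard's theorem: since the rank of $Df$ is at most $m < D-1$ everywhere, the whole image consists of critical values and is therefore null (a footnote in the paper even concedes that Sard is overkill here). You instead make the map explicit in its softmax/exponential-family form $q_{\Phi,\Omega}(\mathbf{n}_j) = c_j e^{\langle \mathbf{w},\mathbf{n}_j\rangle}/\sum_l c_l e^{\langle \mathbf{w},\mathbf{n}_l\rangle}$, pin down which measure on the simplex is meant (the $(D-1)$-dimensional Lebesgue/Hausdorff measure on the affine hyperplane $\{\sum_i p_i = 1\}$), and then conclude via the standard fact that a locally Lipschitz (hence $C^1$) map cannot raise Hausdorff dimension, so the image of $\mathbb{R}^m$ is $\mathcal{H}^{D-1}$-null once $D-1>m$; your parenthetical alternative is essentially the paper's Sard argument. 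What each approach buys: the paper's proof is a one-line appeal to a strong classical theorem, while yours is self-contained modulo an elementary covering estimate, avoids Sard entirely, and is more careful about two points the paper leaves implicit — the exact parameter-to-distribution map (which justifies the smoothness claim) and the meaning of ``measure zero'' on a lower-dimensional simplex. Both arguments are valid and yield the proposition under the same hypothesis $D>m+1$.
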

\begin{proof}
(Sketch) Let $f : \mathbb{R}^m \rightarrow \mathbb{P}^{(D)}$ denote the map from the weights of the MLN to the probabilities of the count vectors given by the count distribution $q_{\Phi,\Omega}(\mathbf{n})$ (here we can assume that the vectors from $\operatorname{Supp}(\Psi,\Omega)$ are ordered lexicographically, hence the $i$-th element of the vector $f(\mathbf{w})$ corresponds to $q_{\Phi,\Omega}(\mathbf{n}_i)$ where $\mathbf{n}_i$ is the $i$-th vector according to this ordering). It follows from the definition of MLNs, in particular from (\ref{eq:mln}), and from the definition of the count distribution that the map $f$ is continuously differentiable. Hence, we can apply Sard's theorem \cite{sard1942measure} and conclude that $f(\mathbb{R}^m)$ has measure zero in~$\mathbb{P}^{(D)}$.\footnote{Using Sard's theorem may be an overkill for our simple application but it gives us the result we need in a relatively straightforward way.}
\end{proof}

Intuitively, the result above is saying nothing more than that, in most cases, we do not have enough parameters to represent any count distribution. In practice, the representable distributions may often be enough (but not always!).

\subsection{WHEN ARE MLNS FULLY EXPRESSIVE?}\label{sec:when_are_mlns_fully_expressive}

Sometimes MLNs are (almost) fully expressive and a natural question to ask is when this is the case. Proposition \ref{prop:prop1} provides us with a necessary condition: the number of first-order logic formulas in the MLN must be at least equal to the size of the set $\operatorname{Supp}(\Psi,\Omega)$ minus 1.

We first take a look at an example. This example is about the case when we have one formula for every possible world $\omega$ that describes the world completely. In particular, we assume that for every possible world $\omega$, the MLN contains a conjunction $\alpha_\omega$ of all ground literals (positive and negative) over the domain $\Delta$ which are true in $\omega$. One can show that such MLNs are fully expressive, as we illustrate in the next example.

\begin{example}
Let $\Delta = \{\textit{Alice},\textit{Bob}\}$ and let us assume there is only one unary relation $\textit{sm}/1$ and no higher-arity relations. We consider the following MLN
\begin{align*}
    \Phi =& \{ (\textit{sm}(\textit{Alice}) \wedge \textit{sm}(\textit{Bob}), w_1), \\
    &(\textit{sm}(\neg \textit{Alice}) \wedge \textit{sm}(\textit{Bob}), w_2), \\
    & (\textit{sm}(\textit{Alice}) \wedge \neg \textit{sm}(\textit{Bob}), w_3), \\
    & (\neg \textit{sm}(\textit{Alice}) \wedge \neg \textit{sm}(\textit{Bob}), w_4)\}.
\end{align*}
It is not difficult to see that $\Phi$ is almost fully expressive if we do not allow infinite weights and fully expressive if infinite weights are allowed. Indeed, for any given $q_1,q_2,q_3,q_4 \in [0;1]$ such that $q_1+q_2+q_3+q_4=1$, it is enough to set $w_i = \ln\frac{q_i}{1-q_i}$.
\end{example}

Generalizing the reasoning from the above example, we have a sufficient condition for an MLN to be fully expressive: the formulas defining it must satisfy that, for any $\omega \in \Omega$, $\omega \models \alpha_i$ holds for exactly one formula $\alpha_i$ from the MLN. As a sanity check, one can also see that if this is the case, the necessary condition is trivially satisfied as well.

We note that the sufficient condition identified in this section is usually not satisfied in practice, though.\footnote{Existing implementations of MLNs support a syntactic sugar ``+'' for grounding selected logical variables. Such MLNs can also be studied in our framework. However, using ``+'' may often lead to more intractable inference, as lifted inference algorithms are polynomial in the domain-size but not in the number of formulas, and, on its own, it still does not guarantee full expressivity even in the simplest case illustrated in Section~\ref{sec:motivating_example} (note that, in particular, the MLN $\Phi=\{(\textit{heads}(+x),w)\}$ does not satisfy the sufficient condition from this section).} Hence we need a different approach to obtain full expressivity. In this paper we add complex weights to MLNs for this purpose.

\section{$\mathbb{C}$-MLNS}\label{sec:complex}

In this section we introduce MLNs with complex weights, which we call {\em complex} MLNs ($\mathbb{C}$-MLNs). As it turns out, just replacing real weights by complex weights would not bring us much expressivity. We need to allow having a vector of complex weights for every formula in the MLN. 

We start by defining complex MLNs formally.

\begin{definition}[$\mathbb{C}$-MLNs]
Let $\Omega$ be a set of possible worlds. A complex MLN is a set $\Phi = \{ (\alpha_1, \mathbf{w}_1), \dots, (\alpha_m,\mathbf{w}_m) \}$ where $\alpha_i$'s are first-order logic formulas and $\mathbf{w}_i$'s are vectors of complex numbers of the same dimension $d$, i.e., $\mathbf{w}_1,\dots,\mathbf{w}_m \in \mathbb{C}^d$. We define the probability given by $\Phi$ as
$$p_{\Phi,\Omega}(\omega) = \frac{1}{Z} \sum_{i=1}^d \exp{\left( \sum_{j=1}^m [\mathbf{w}_j]_i \cdot N(\alpha_j,\omega) \right)},$$
where $[\mathbf{w}_j]_i$ denotes the $i$-th entry of the vector $\mathbf{w}_j$ and
$$Z = \sum_{\omega \in \Omega} \sum_{i=1}^d \exp{\left( \sum_{j=1}^m [\mathbf{w}_j]_i \cdot N(\alpha_j,\omega) \right)}.$$
A complex MLN $\Phi$ is called {\em proper} if $p_{\Phi,\Omega}(\omega) \in [0;1]$ for all $\omega \in \Omega$.
\end{definition}

\noindent In what follows we will only work with valid MLNs; we will omit the term ``proper'' when there is no risk of confusion.

There are two main differences w.r.t.\ normal MLNs. The first (and obvious) one is that we allow complex weights. The second is that the expression that defines probability of a possible world is a sum of exponentials in the case of complex MLNs as opposed to a single exponential in the case of normal MLNs. This is needed to guarantee full expressivity (following the reasoning in the proof of Proposition \ref{prop:prop1} in Section \ref{sec:full_expr_def} and the discussion therein). 

The next example shows that we can express more distributions using complex MLNs than using normal MLNs (later we will show that complex MLNs are actually fully expressive but here we want to just give an illustration).

\begin{example}
Let $\alpha = \textit{heads}(x)$ be a first-order logic formula and $\mathbf{w} = [0, \pi \cdot i]$ where $i$ is the imaginary unit. Let $\Phi = \{ (\alpha, \mathbf{w}) \}$ be a complex MLN. Let $\Delta = \{A,B,C,D\}$ be the domain of the MLN and $\Omega = 2^{\{\textit{heads}(A), \textit{heads}(B), \textit{heads}(C), \textit{heads}(D)\}}$ the respective set of all possible worlds. We will now compute the distribution of $\mathbf{N}(\Phi,O)$ where $O$ is sampled from the distribution given by the complex MLN $\Phi$ over the domain $\Omega$. First we compute the partition function. We have: 
$Z = 16$
which can be computed by brute-force enumeration. Then we obtain:
\begin{align*}
    P[N(\alpha,O) = 0] &= \left(\begin{array}{c} 4 \\ 0 \end{array}\right) \cdot \frac{e^{0} + e^{0}}{16} &= \frac{1}{8}, \\
    P[N(\alpha,O) = 1] &= \left(\begin{array}{c} 4 \\ 1 \end{array}\right) \cdot \frac{e^{0} + e^{\pi \cdot i}}{16} &= 0, \\
    P[N(\alpha,O) = 2] &= \left(\begin{array}{c} 4 \\ 2 \end{array}\right) \cdot \frac{e^{0} + e^{2 \pi \cdot i}}{16} &= \frac{3}{4}, \\
    P[N(\alpha,O) = 3] &= \left(\begin{array}{c} 4 \\ 3 \end{array}\right) \cdot \frac{e^{0} + e^{3 \pi \cdot i}}{16} &= 0, \\
    P[N(\alpha,O) = 4] &= \left(\begin{array}{c} 4 \\ 4 \end{array}\right) \cdot \frac{e^{0} + e^{4 \pi \cdot i}}{16} &= \frac{1}{8}.
\end{align*}
The distribution $N(\alpha,O)$ is obviously not a binomial distribution. Hence, this is already an example of a distribution that could not be encoded by an MLN of the form $\Phi = \{ (\textit{heads}(x), w) \}$ but that can be represented by the respective complex MLN.
\end{example}

In Section \ref{sec:inference} we show that inference in complex MLNs can be performed using WFOMC in a way completely analogical to the classical case.

\section{$\mathbb{C}$-MLNS ARE FULLY EXPRESSIVE}

In this section we show that $\mathbb{C}$-MLNs are actually fully expressive (if they contain the trivial formula $\top$, i.e.\ tautology). To show this we first show how to obtain a $\mathbb{C}$-MLN whose count-distribution is equal to the Kronecker $\delta$-function $\delta(\mathbf{n}-\mathbf{n}_0)$.

\begin{lemma}\label{lemma:kronecker}
Let 
$\Phi = \{(\alpha_1, \mathbf{w}_1),$ $\dots,$ $(\alpha_m, \mathbf{w}_m),$ $(\top, \mathbf{w}_\top) \}$ 
be a $\mathbb{C}$-MLN and $\Delta$ be the set of domain elements and $\Omega$ the set of all possible worlds on this domain. Let us define 
\begin{multline*}
    \mathcal{J} = \left\{0,1,2,\dots, |\Delta|^{|\textit{vars}(\alpha_1)|} \right\} \times \dots \\
    \times \left\{0,1,2,\dots, |\Delta|^{|\textit{vars}(\alpha_m)|} \right\},
\end{multline*}
$$\mathbf{M} = [|\Delta|^{|\textit{vars}(\alpha_1)|}+1, \dots, |\Delta|^{|\textit{vars}(\alpha_m)|}+1, 1],$$
and 
$$\mathbf{W}(\mathbf{k}) =  i 2\pi \cdot \left( \mathbf{k}/\mathbf{M} - [0,\dots,0, \langle \mathbf{k}/\mathbf{M}, \mathbf{n}_0 \rangle ]\right),$$
where ``/'' denotes the component-wise division of the two vectors and $\mathbf{n}_0 \in \mathcal{J} \times \{1\}$. Let us order the elements of $\mathcal{J} \times \{ 0 \}$ arbitrarily and denote $\mathbf{j}(j)$ the $j$-th element of $\mathcal{J}$.
Using the above notations, let us define the weights of the $\mathbb{C}$-MLN $\Phi$ as follows:
\begin{align*}
    \mathbf{w}_1 &= \left[ [\mathbf{W}(\mathbf{j}(1))]_1, [\mathbf{W}(\mathbf{j}(2))]_1, \dots, [\mathbf{W}(\mathbf{j}(|\mathcal{J}|))]_1 \right]\\
    & \dots \\
    \mathbf{w}_m &= \left[ [\mathbf{W}(\mathbf{j}(1))]_m, [\mathbf{W}(\mathbf{j}(2))]_m, \dots, [\mathbf{W}(\mathbf{j}(|\mathcal{J}|))]_m \right]\\
    \mathbf{w}_\top &= \left[ [\mathbf{W}(\mathbf{j}(1))]_{m+1}, \dots, [\mathbf{W}(\mathbf{j}(|\mathcal{J}|))]_{m+1} \right].
\end{align*}
Then 
$$p_{\Phi,\Omega}(\omega) = \begin{cases} \frac{1}{Z} & \mbox{if } \mathbf{N}(\Phi,\omega) = \mathbf{n}_0 \\ 0 & \mbox{otherwise} \end{cases}.$$
\end{lemma}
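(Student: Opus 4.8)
The plan is to recognize the weight construction as a disguised inverse discrete Fourier transform of the shifted Kronecker delta, and then to unwind the $\mathbb{C}$-MLN semantics so that the sum over $i$ in the definition of $p_{\Phi,\Omega}$ literally becomes the IDFT sum from equation~(\ref{eq:idft}). First I would fix an arbitrary possible world $\omega$ and compute $\mathbf{N}(\Phi,\omega)$. Note that each count $N(\alpha_j,\omega)$ lies in $\{0,1,\dots,|\Delta|^{|\textit{vars}(\alpha_j)|}\}$, so $\mathbf{N}(\Phi,\omega)$ (with the trailing $N(\top,\omega)=1$ appended, since $\top$ has no variables and $N(\top,\omega)=\mathds{1}(\omega\models\top)=1$) is an element of $\mathcal{J}\times\{1\}$. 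Hence the relevant object is $\mathbf{n} := \mathbf{N}(\Phi,\omega)$, ranging over $\mathcal{J}\times\{1\}$ as $\omega$ ranges over $\Omega$, and $\mathbf{M}$ plays the role of $\mathbf{N}$ in Section~\ref{sec:fourier}.

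Next I would expand the unnormalized weight of $\omega$, namely $\sum_{i=1}^{|\mathcal{J}|}\exp\bigl(\sum_{j=1}^m [\mathbf{w}_j]_i N(\alpha_j,\omega) + [\mathbf{w}_\top]_i\bigr)$. By construction the $i$-th summand, for $i$ indexing the element $\mathbf{k}=\mathbf{j}(i)\in\mathcal{J}$ (extended by a final coordinate $0$), has exponent $\langle \mathbf{W}(\mathbf{k}), \mathbf{n}\rangle$ where $\mathbf{n}=\mathbf{N}(\Phi,\omega)$. Plugging in the definition of $\mathbf{W}(\mathbf{k})$: the first $m$ coordinates contribute $i2\pi\langle \mathbf{k}/\mathbf{M}, \mathbf{n}\rangle$ restricted to those coordinates, while the final coordinate contributes $i2\pi(\,[\mathbf{k}/\mathbf{M}]_{\text{last}} - \langle \mathbf{k}/\mathbf{M}, \mathbf{n}_0\rangle\,)\cdot 1$. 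Here I need to be careful with the bookkeeping of which coordinate of $\mathbf{k}/\mathbf{M}$ is the "$\top$-coordinate" and to use that the last coordinate of $\mathbf{M}$ is $1$ and the last coordinate of every $\mathbf{k}\in\mathcal{J}\times\{0\}$ is $0$, so that coordinate of $\mathbf{k}/\mathbf{M}$ vanishes and the only surviving term from it is $-i2\pi\langle\mathbf{k}/\mathbf{M},\mathbf{n}_0\rangle$. Collecting everything, the exponent equals $i2\pi\langle\mathbf{k}/\mathbf{M}, \mathbf{n}-\mathbf{n}_0\rangle$ (the last coordinates of $\mathbf{n}$ and $\mathbf{n}_0$ being equal to $1$ cancel in the difference). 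Therefore the unnormalized weight of $\omega$ is $\sum_{\mathbf{k}\in\mathcal{J}\times\{0\}} e^{i2\pi\langle\mathbf{k}/\mathbf{M},\,\mathbf{n}-\mathbf{n}_0\rangle}$, which is exactly $\prod_l [\mathbf{M}]_l$ times the inverse DFT, evaluated at $\mathbf{n}-\mathbf{n}_0$, of the constant function $g\equiv 1$. By the facts recalled in Section~\ref{sec:fourier}, the IDFT of $g\equiv 1$ (equivalently $\mathcal{F}\{\delta\}=1$, so $\mathcal{F}^{-1}\{1\}=\delta$) is the Kronecker delta $\delta(\mathbf{n}-\mathbf{n}_0)$, so the unnormalized weight of $\omega$ is $\bigl(\prod_l [\mathbf{M}]_l\bigr)\,\delta(\mathbf{N}(\Phi,\omega)-\mathbf{n}_0)$, i.e.\ a positive constant when $\mathbf{N}(\Phi,\omega)=\mathbf{n}_0$ and $0$ otherwise.

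Finally I would normalize: summing the unnormalized weights over all $\omega\in\Omega$ gives $Z = \bigl(\prod_l[\mathbf{M}]_l\bigr)\cdot |\{\omega : \mathbf{N}(\Phi,\omega)=\mathbf{n}_0\}|$, and dividing yields $p_{\Phi,\Omega}(\omega) = 1/Z$ when $\mathbf{N}(\Phi,\omega)=\mathbf{n}_0$ and $0$ otherwise, which is the claimed form (with this $Z$). I would also remark that the statement is only non-vacuous when $\mathbf{n}_0\in\operatorname{Supp}(\Psi\cup\{\top\},\Omega)$, since otherwise $Z=0$; implicitly $\mathbf{n}_0$ is chosen to be realizable. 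The main obstacle is purely notational rather than conceptual: matching up the index set $\mathcal{J}$ (and its lexicographic-or-arbitrary ordering $\mathbf{j}(\cdot)$), the vector $\mathbf{M}$, and the "extra" final $\top$-coordinate with the DFT notation of Section~\ref{sec:fourier}, and verifying cleanly that the final coordinate contributes exactly the shift factor $e^{-i2\pi\langle\mathbf{k}/\mathbf{M},\mathbf{n}_0\rangle}$ and nothing else. Once the identification $\mathbf{W}(\mathbf{k})\cdot\mathbf{n} \mapsto i2\pi\langle\mathbf{k}/\mathbf{M},\mathbf{n}-\mathbf{n}_0\rangle$ is pinned down, the result is immediate from $\mathcal{F}^{-1}\{1\}=\delta$.
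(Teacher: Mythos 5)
Your proof is correct and takes essentially the same route as the paper: it recognizes the sum over the $|\mathcal{J}|$ components of the $\mathbb{C}$-MLN as the inverse-DFT reconstruction of the shifted Kronecker delta (Section \ref{sec:fourier}), with the $\top$-coordinate supplying exactly the shift term $-i2\pi\langle\mathbf{k}/\mathbf{M},\mathbf{n}_0\rangle$. Your version is merely more explicit about the coordinate bookkeeping and the constant factor $\prod_l[\mathbf{M}]_l$, which the paper's statement and proof silently absorb into the normalization $Z$.
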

\begin{proof}
First we rewrite the probability of a possible world $\omega$ induced by the $\mathbb{C}$-MLN $\Phi$ in a more compact form using the scalar product notation $\langle,\rangle$:
\begin{multline*}
p_{\Phi,\Omega}(\omega) = \frac{1}{Z} \sum_{\mathbf{j} \in \mathcal{J} \times \{ 0 \}} e^{\left( \langle \mathbf{W}(\mathbf{j}), \mathbf{N}(\Phi,\omega) \rangle \right)} \\
= \frac{1}{Z} \sum_{\mathbf{j} \in \mathcal{J} \times \{0\}} e^{\left( \langle i 2\pi \mathbf{j}/\mathbf{M}, \mathbf{N}(\Phi,\omega) \rangle - i 2 \pi \langle \mathbf{j}/\mathbf{M}, \mathbf{n}_0 \rangle \right)}.
\end{multline*}
We can notice (cf Section \ref{sec:fourier}) that this is nothing else than $\frac{1}{Z} \delta({\mathbf{N}(\Phi,\omega)-\mathbf{n}_0})$, which finishes the proof of this lemma.
\end{proof}

\begin{theorem}\label{thm:expressivity}
Any $\mathbb{C}$-MLN containing the formula $\top$ is fully expressive.
\end{theorem}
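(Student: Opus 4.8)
The plan is to realise any target distribution as a ``superposition'' of the single-point $\mathbb{C}$-MLNs produced by Lemma \ref{lemma:kronecker}, exploiting the fact that the unnormalized weight of a world in a $\mathbb{C}$-MLN is a \emph{sum} of exponentials, one per coordinate of the weight vectors. Fix $\Psi = \{\alpha_1,\dots,\alpha_m,\top\}$, a domain $\Delta$ with its set of worlds $\Omega$, and a target distribution $Q$ on $S := \operatorname{Supp}(\Psi,\Omega)$; since $N(\top,\omega)\equiv 1$, every $\mathbf{s}\in S$ has last coordinate $1$ and $S\subseteq \mathcal{J}\times\{1\}$ for the grid $\mathcal{J}$ of Lemma \ref{lemma:kronecker}. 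For each $\mathbf{s}\in S$ with $Q(\mathbf{s})>0$, apply Lemma \ref{lemma:kronecker} with $\mathbf{n}_0=\mathbf{s}$ to obtain weight vectors $\mathbf{w}_1^{(\mathbf{s})},\dots,\mathbf{w}_m^{(\mathbf{s})},\mathbf{w}_\top^{(\mathbf{s})}\in\mathbb{C}^{|\mathcal{J}|}$ whose associated unnormalized world-weight equals $\kappa\cdot\mathds{1}(\mathbf{N}(\Psi,\omega)=\mathbf{s})$, where $\kappa$ is a fixed positive constant (coming from the DFT normalization) that does not depend on $\mathbf{s}$.

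The key observation is that adding a real constant $\beta_{\mathbf{s}}$ to every entry of $\mathbf{w}_\top^{(\mathbf{s})}$ multiplies that block's contribution to the unnormalized weight by $e^{\beta_{\mathbf{s}}}$, precisely because $N(\top,\omega)\equiv 1$. I would therefore build a single $\mathbb{C}$-MLN $\Phi$ over $\Psi$ whose weight vectors have dimension $d=|\{\mathbf{s}\in S: Q(\mathbf{s})>0\}|\cdot|\mathcal{J}|$, obtained by concatenating, over all $\mathbf{s}$ with $Q(\mathbf{s})>0$, the blocks $\mathbf{w}_j^{(\mathbf{s})}$ for $j=1,\dots,m$ and $\mathbf{w}_\top^{(\mathbf{s})}+\beta_{\mathbf{s}}\mathbf{1}$, with $\beta_{\mathbf{s}}=\ln\bigl(Q(\mathbf{s})/c_{\mathbf{s}}\bigr)$ and $c_{\mathbf{s}}=|\{\omega\in\Omega:\mathbf{N}(\Psi,\omega)=\mathbf{s}\}|\ge 1$; blocks for $\mathbf{s}$ with $Q(\mathbf{s})=0$ are simply omitted. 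It then remains to verify that $\Phi$ does the job: by construction the unnormalized weight of $\omega$ is $\kappa\, e^{\beta_{\mathbf{N}(\Psi,\omega)}}=\kappa\, Q(\mathbf{N}(\Psi,\omega))/c_{\mathbf{N}(\Psi,\omega)}$ (with $0/c$ read as $0$), so $Z=\kappa\sum_{\mathbf{s}\in S}c_{\mathbf{s}}\cdot Q(\mathbf{s})/c_{\mathbf{s}}=\kappa$; hence $p_{\Phi,\Omega}(\omega)=Q(\mathbf{N}(\Psi,\omega))/c_{\mathbf{N}(\Psi,\omega)}\in[0;1]$ (so $\Phi$ is proper) and $q_{\Phi,\Omega}(\mathbf{n})=\sum_{\omega:\mathbf{N}(\Psi,\omega)=\mathbf{n}}p_{\Phi,\Omega}(\omega)=c_{\mathbf{n}}\cdot Q(\mathbf{n})/c_{\mathbf{n}}=Q(\mathbf{n})$, as required.

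The step I expect to be the real obstacle is exactly the one the construction above is designed around: one cannot simply average the single-point MLNs at the level of the probability distributions $p_{\Phi,\Omega}$, since distinct worlds sharing a count vector are not reweighted and the naive mixture yields $q_{\Phi,\Omega}\propto c_{\mathbf{n}}Q(\mathbf{n})$ instead of $Q(\mathbf{n})$. The remedy is to mix at the level of unnormalized count-weights and to pre-divide each block by the multiplicity $c_{\mathbf{s}}$, and what makes inserting these per-block scalars $e^{\beta_{\mathbf{s}}}$ possible at all is the presence of the tautology $\top$, whose grounding count is identically $1$ --- this is where the hypothesis ``containing the formula $\top$'' is used. (If one allows only finite weights one should also assume $Q$ is strictly positive on $S$, giving \emph{almost} full expressivity; permitting the omission of zero-probability blocks --- equivalently, infinite weights --- yields full expressivity.)
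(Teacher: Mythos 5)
Your construction is correct and is essentially the paper's own proof: the paper likewise takes the single-point $\mathbb{C}$-MLNs from Lemma \ref{lemma:kronecker}, shifts the weight vector of $\top$ by $\ln(A_\mathbf{j}) - \ln(Z_\mathbf{j})$ (which, since $Z_\mathbf{s}=\kappa c_\mathbf{s}$, is exactly your $\beta_\mathbf{s}$ up to an additive constant that only rescales $Z$), and concatenates the blocks into one $\mathbb{C}$-MLN, omitting zero-probability points. The only cosmetic difference is framing: the paper describes the result as a convex combination of the normalized $\Phi_\mathbf{j}$ (which, contrary to your aside, does already yield count distribution $Q$, since each $\Phi_\mathbf{j}$ spreads its mass $1$ uniformly over the $c_\mathbf{j}$ worlds with count $\mathbf{j}$), while you carry out the same per-block normalization explicitly through the multiplicities $c_\mathbf{s}$.
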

\begin{proof}
Let $\Phi$ and $\mathcal{J}$ be as in Lemma \ref{lemma:kronecker}. We proceed as follows. For every $\mathbf{j} \in \mathcal{J} \times \{1\}$ we construct a $\mathbb{C}$-MLN $\Phi_{\mathbf{j}}$ such that $p_{\Phi_{\mathbf{j}},\Omega}(\omega) = \frac{\delta(\mathbf{N}(\Phi,\omega)-\mathbf{j})}{Z_{\mathbf{j}}}$. Since $\Phi$ contains $\top$, we can clearly do this. Now, to represent a $\mathbb{C}$-MLN with an arbitrary given count distribution $q()$, we can just construct a convex combination of the $\mathbb{C}$-MLNs $\Phi_\mathbf{j}$. What remains to check is that a convex combination of $\mathbb{C}$-MLNs is still a $\mathbb{C}$-MLN. This is easy to see. Let $q(\mathbf{n}) = \sum_{\mathbf{j} \in \mathcal{J} \times \{1\}} A_\mathbf{j} \cdot \delta(\mathbf{n}-\mathbf{j})$. We construct a $\mathbb{C}$-MLN inducing this count distribution as follows. For all $\mathbf{j} \in \mathcal{J} \times \{1\}$, if $A_\mathbf{j} \neq 0$, we take the $\mathbb{C}$-MLN $\Phi_\mathbf{j} = \{(\alpha_1,\mathbf{w}_1^{(\mathbf{j})}),\dots, (\alpha_m,\mathbf{w}_m^{(\mathbf{j})}), (\top,\mathbf{w}_\top^{(\mathbf{j})}) \}$ (defined above) and modify it by changing the weight $\mathbf{w}_\top$ giving us the $\mathbb{C}$-MLN $\Phi_\mathbf{j}' = \{(\alpha_1,\mathbf{w}_1^{(\mathbf{j})}),\dots, (\alpha_m,\mathbf{w}_m^{(\mathbf{j})}), (\top,\mathbf{w}_\top^{(\mathbf{j})} + \ln(A_\mathbf{j}) - \ln(Z_\mathbf{j})) \}$ (recall that we ignore those $\mathbf{j}$'s for which $A_\mathbf{j}$ is zero and therefore there are no $\log 0$'s appearing anywhere). Now, it is not difficult to check that
\begin{multline*}
\widetilde{p}(\omega) = \sum_{\mathbf{j} \in \mathcal{J} \times \{1\}} p_{\Phi_\mathbf{j}',\Omega}(\omega) 
= \\ \sum_{\mathbf{j} \in \mathcal{J}\times \{1\}} \sum_{\mathbf{k} \in \mathcal{J} \times \{0\}} e^{ \langle i 2\pi \mathbf{k}/\mathbf{M}, \mathbf{N}(\Phi,\omega) \rangle } 
\cdot e^{- i 2 \pi \langle \mathbf{k}/\mathbf{M}, \mathbf{j} \rangle + \ln \frac{A_\mathbf{j}}{Z_\mathbf{j}} }.
\end{multline*}
is the probability given by a $\mathbb{C}$-MLN that we obtain by concatenating the respective weight vectors from the $\mathbb{C}$-MLNs $\Phi_\mathbf{j}'$ for all $\mathbf{j} \in \mathcal{J}$. Hence any count distribution is realizable by a $\mathbb{C}$-MLN, which finishes the proof.
\end{proof}

\begin{remark}
The inclusion of the tautology formula $\top$ in $\mathbb{C}$-MLNs is important for full expressivity.
\end{remark}

\begin{remark}
From the proof of Theorem \ref{thm:expressivity}, it may seem that we need $|\mathcal{J}|^2$ components in the $\mathbb{C}$-MLN in order to guarantee full expressivity. However, we can reduce the number of components of the $\mathbb{C}$-MLN by pushing the summation $\sum_{\mathbf{j} \in \mathcal{J}}$ inside and simplifying. 
We can then reduce the number of components of the $\mathbb{C}$-MLN to $|\mathcal{J}|$ (we omit the details here).\footnote{Alternatively, one could prove the result about expressivity of $\mathbb{C}$-MLNs using inverse DFT, which we discuss farther in the paper, however, we believe the present proof is more intuitive.}
\end{remark}


\subsection{WHY COMPLEX NUMBERS?}

At this point one could ask: Why complex numbers? Why not just specify the count distribution using a look-up table. After all, there are only polynomially (in $|\Delta$) many points in $\operatorname{Supp}(\Psi,\Omega)$ and typically exponentially many possible worlds in $\Omega$. The reason will become clear in the next section where we discuss inference in $\mathbb{C}$-MLNs. In short, {\em exact} inference in $\mathbb{C}$-MLNs can be done by the same algorithms (and circuits) as in classical MLNs; all we need to do is to replace real numbers by complex numbers.

\section{INFERENCE IN $\mathbb{C}$-MLNS}\label{sec:inference}

We can use WFOMC for inference in $\mathbb{C}$-MLNs (assuming that complex weights are supported), which turns out to be important for showing that we can use $\mathbb{C}$-MLNs for domain-lifted inference on distributions that are not expressible by normal MLNs.

In particular, to compute the partition function $Z$ of a $\mathbb{C}$-MLN $\Phi$, we proceed analogically to how one proceeds for normal MLNs (cf Section \ref{sec:wfomc}, which is based on existing works, e.g. \cite{van2011lifted}). Let a $\mathbb{C}$-MLN $\Phi = \{(\alpha_1,\mathbf{w}_1),\dots,(\alpha_m,\mathbf{w}_m) \}$ be given, with $d$ the dimension of the vectors $\mathbf{w}_i$. 
First, for every weighted formula $(\alpha_i,\mathbf{w}_i) \in \Phi$, where the free variables in $\alpha_i$ are exactly $x_1$, $\dots$, $x_k$, we create a new formula 
$$
    \forall x_1,\dots,x_k : \xi_i(x_1,\dots,x_k) \Leftrightarrow \alpha_i(x_1,\dots,x_k)
$$
where $\xi$ is a new fresh predicate. We denote the resulting set of new formulas $\Gamma$. 
Then we define $d$ different weight functions $w_1$, $w_2$, $\dots$, $w_d$ and we set
$w_j(\xi_i) = \exp{\left([\mathbf{w}_i]_j \right)}$
and $\overline{w}_j(\xi_i) = 1$ and for all other predicates we set all $w_1$, $\dots$, $w_d$ and $\overline{w}_1$, $\dots$, $\overline{w}_d$ equal to 1. It is easy to check that then 
\begin{equation}\label{eq:Z}
Z = \sum_{j=1}^d \mathbf{WFOMC}(\Gamma,w_j,\overline{w}_j),
\end{equation}
which is the partition function that we needed to compute. 
To compute the marginal probability of a given query $q$, we have 
$$\textit{Pr}_{\Phi,\Omega}[q] = \frac{\sum_{j=1}^d\mathbf{WFOMC}(\Gamma \cup \{ q \}, w_j, \overline{w}_j)}{\sum_{j=1}^d\mathbf{WFOMC}(\Gamma, w_j, \overline{w}_j)}.$$

\begin{remark}
The fact that we can compute the partition function and marginal probabilities of $\mathbb{C}$-MLNs using WFOMC is important. It means that whenever inference in an MLN using WFOMC is domain-liftable and if this extends to domain-liftability of WFOMC over $\mathbb{C}$, any $\mathbb{C}$-MLN with the same first-order logic formulas is domain-liftable as well. Importantly, the $\mathbb{C}$-MLN can model more distributions than its classical counterpart. 
\end{remark}

\subsection{NUMERICAL CONSIDERATIONS AND DOMAIN LIFTABILITY}\label{sec:numerics}

We need to pay enough attention to the representation of the complex numbers that appear during computations when performing inference with $\mathbb{C}$-MLNs.\footnote{Similar numerical issues for BNs were studied in \cite{dagum1993approximating}.} In particular, this is needed if we want to make claims about domain-lifted inference with complex weights by using existing algorithms, e.g.\ those based on FO-sda-DNNF circuits~\cite{van2013lifted}.

We assume only (i) rational numbers $\frac{a}{b}$ represented as a pair of integers $(a,b)$, (ii) complex numbers of the form $\frac{a}{b} \cdot e^{i 2\pi c/d}$ represented as a $4$-tuple of integers $(a,b,c,d)$ and (iii) complex numbers $\frac{a_1}{b_1} \cdot e^{i 2\pi c_1/d_1} + \dots + \frac{a_k}{b_k} \cdot e^{i 2\pi c_k/d_k}$ represented as a sequence of $4$-tuples $(a_1,b_1,c_1,d_1), \dots, (a_k,b_k,c_k,d_k)$. Note that any complex number can be arbitrarily well approximated by numbers of this form.

For evaluating FO-sda-DNNF circuits, one needs the following operations: (i) summation, (ii) multiplication, (iii) multiplication by $n' \in \mathbb{N}$ and (iv) exponentiation to the power of $n'' \in \mathbb{N}$ where $n' \leq n$ and $n'' \leq n$ and $n$ is 
polynomial in the size of the domain.

We start with a simple remark that follows from how we restricted the allowed representation of complex numbers.

\begin{remark}
Any number that can be produced from a finite set of numbers of the three types specified above using the four allowed operations can be represented using $O(1)$ $4$-tuples $(a_1,b_1,c_1,d_1), \dots, (a_k,b_k,c_k,d_k)$ of rational numbers representing the sum $\frac{a_1}{b_1} \cdot e^{i 2\pi c_1/d_1} + \dots + \frac{a_k}{b_k} \cdot e^{i 2\pi c_k/d_k}$.\footnote{The big-$O$ notation used here is w.r.t.\ the domain size $|\Delta|$.} This follows from the fact that the arguments of the complex exponentials can only be fractional multiples of $2\pi$ here.
\end{remark}

In principle, the above remark allows us to obtain an upper bound on the representation size needed for a single complex number during computation of WFOMC. What remains is to bound the size of the representation of the integers $a_i,b_i,c_i,d_i$. For that one needs to usually dig into details of the specific algorithms inference. For instance, when computing WFOMC by evaluating an FO-sda-DNNF circuit, there is a constant number of exponentiations on any path from the root of the circuit to its leaves. There is also always just a polynomial number (in the domain size) of applications of the four numerical operations in the circuit. One can then show that the representation of the complex numbers needed during the computation is polynomial in the domain size. This is enough to show that domain-liftability using this approach transfers to complex weights as well. We omit the tedious details here.

\section{DFT OF COUNT DISTRIBUTIONS}

In this section we look at the DFT of the count distributions $q_{\Phi,\Omega}$ induced by an MLN (in this section it will still be the classical MLN without complex weights to simplify the exposition). 
We show that to compute the DFT, all we need is to be able to compute several WFOMCs with complex weights. 

In what follows in this section, let $\Phi = \{ (\alpha_1, w_1), \dots, (\alpha_m,w_m) \}$ be an MLN and $\Omega$ be a set of possible worlds on a domain $\Delta$. We also define $\Psi = \{\alpha_1,\dots, \alpha_m \}$ to be the respective set of first-order logic formulas contained in $\Phi$ and $\mathbf{w} = [w_1,w_2,\dots,w_m]$ to be the respective vector of weights from the MLN.

We want to compute the DFT of $q_{\Phi,\Omega}(\mathbf{n})$. Here, $q_{\Phi,\Omega}(\mathbf{n})$ is a real-valued function of $m$-dimensional integer vectors. We can restrict the domain\footnote{Here, {\em domain} refers to the domain of a mathematical function, not to a {\em domain} as a set of domain elements.} of $q_{\Phi,\Omega}(\mathbf{n})$ to the set $\mathcal{D} = \left\{0,1,\dots, |\Delta|^{|\textit{vars}(\alpha_1)|} \right\} \times \dots \times \left\{ 0, 1, \dots, |\Delta|^{|\textit{vars}(\alpha_m)|} \right\}$. 
This still ensures that it will always be the case that $\operatorname{Supp}(\Phi,\Omega) \subseteq \mathcal{D}$.

From the definition of DFT we have
\begin{equation}\label{dftq1}
    g_{\Phi,\Omega}(\mathbf{k}) = \mathcal{F} \left\{ q_{\Phi,\Omega} \right\} = \sum_{\mathbf{n} \in \mathcal{D}} q_{\Phi,\Omega}(\mathbf{n})  e^{ - i 2 \pi \langle \mathbf{k}, \mathbf{n} / \mathbf{M} \rangle }
\end{equation}
where $\mathbf{M} = \left[|\Delta|^{|\textit{vars}(\alpha_1)|}+1,\dots,|\Delta|^{|\textit{vars}(\alpha_m)|}+1\right]$ and the division in $\mathbf{n} / \mathbf{M}$ is again component-wise.

Plugging in the definition of $q_{\Phi,\Omega}(\mathbf{n})$ into (\ref{dftq1}), we obtain
\begin{multline*}
    g_{\Phi,\Omega}(\mathbf{k})= \sum_{\mathbf{n} \in \mathcal{D}}\sum_{\omega \in \Omega : \mathbf{N}(\Phi,\omega) = \mathbf{n}} p_{\Phi,\Omega}(\omega)  e^{ - i 2 \pi \langle \mathbf{k}, \mathbf{n} / \mathbf{M} \rangle } \\
    = \sum_{\mathbf{n} \in \mathcal{D}}\sum_{\omega \in \Omega : \mathbf{N}(\Phi,\omega) = \mathbf{n}} \frac{1}{Z} e^{\langle \mathbf{w}, \mathbf{N}(\Phi,\omega) \rangle}  e^{ - i 2 \pi \langle \mathbf{k}, \mathbf{n} / \mathbf{M} \rangle } \\
    = \sum_{\mathbf{n} \in \mathcal{D}}\sum_{\omega \in \Omega : \mathbf{N}(\Phi,\omega) = \mathbf{n}} \frac{1}{Z} e^{\langle \mathbf{w}, \mathbf{N}(\Phi,\omega) \rangle}  e^{ - i 2 \pi \langle \mathbf{k}/\mathbf{M}, \mathbf{n} \rangle } \\
    = \sum_{\mathbf{n} \in \mathcal{D}}\sum_{\omega \in \Omega : \mathbf{N}(\Phi,\omega) = \mathbf{n}} \frac{1}{Z} e^{\langle \mathbf{w}, \mathbf{N}(\Phi,\omega) \rangle}  e^{ - i 2 \pi \langle \mathbf{k}/\mathbf{M}, \mathbf{N}(\Phi,\omega) \rangle } \\
    = \frac{1}{Z} \sum_{\omega \in \Omega}  e^{\langle \mathbf{w}- i 2 \pi \mathbf{k}/\mathbf{M}, \mathbf{N}(\Phi,\omega) \rangle}.
\end{multline*}
Note that, in the above, $Z$ is the partition function of the original MLN $\Phi$. For simplicity of exposition, $\Phi$ was not a $\mathbb{C}$-MLN here but the same approach also works for $\mathbb{C}$-MLNs (with one difference being that there we also need to sum over all  components of the $\mathbb{C}$-MLN and hence we need more calls to the WFOMC oracle).



\subsection{COMPUTING DFT USING WFOMC}\label{sec:dftusingwfomc}

In the previous section, we found the form of the DFT of the count distribution of an MLN $\Phi$ to be

$$g_{\Phi,\Omega}(\mathbf{k}) = \frac{1}{Z} \sum_{\omega \in \Omega}  e^{\langle \mathbf{w}- i 2 \pi \mathbf{k}/\mathbf{M}, \mathbf{N}(\Phi,\omega) \rangle}. $$

Here we show how to compute $g_{\Phi,\Omega}(\mathbf{k})$ using WFOMC. First, computing $Z$ is simple. It is just the partition function $Z$ of the MLN $\Phi$. Therefore we can compute it using~(\ref{eq:Z}). 

Next, to compute the sum $\sum_{\omega \in \Omega}  e^{\langle \mathbf{w}- i 2 \pi \mathbf{k}/\mathbf{M}, \mathbf{N}(\Phi,\omega) \rangle}$, we can notice that it is again a partition function, but of another {\em complex} MLN. It is the partition function of the $\mathbb{C}$-MLN 
$\Phi_{\mathbf{k}} = \{ (\alpha_1,[\mathbf{w}- i 2 \pi \mathbf{k}/\mathbf{M}]_1), \dots, (\alpha_m,[\mathbf{w}- i 2 \pi \mathbf{k}/\mathbf{M}]_m) \}$ (we recall that $[\mathbf{v}]_j$ is used to denote the $j$-th entry of the vector $\mathbf{v}$). We can therefore use (\ref{eq:Z}) to compute the sum using WFOMC. 

\subsection{CONSTRUCTING MARGINAL POLYTOPES}

Here we use the techniques developed in this paper to design algorithms for computing count distributions of MLNs and for constructing so called {\em relational marginal polytopes}. The algorithm for constructing relational marginal polytopes will turn out to need a much smaller number of WFOMC oracle calls than a recently published domain-lifted algorithm from \cite{kuzelka.aistats.2020}. 

Let us first recall the definition of {\em relational marginal polytopes} \cite{kuzelka2018relational,kuzelka.aistats.2020}.

\begin{definition}[Relational marginal polytope]
Let $\Omega$ be the set of possible worlds on domain $\Delta$ and $\Psi = (\alpha_1, \dots, \alpha_m)$ be a list of formulas. First we define $Q(\alpha,\omega) := \frac{N(\alpha,\omega)}{|\Delta|^{|\textit{vars}(\alpha)|}}$. Then we define the relational marginal polytope $\mathbf{RMP}(\Psi,\Delta)$ w.r.t.\ $\Psi$ 
as $\mathbf{RMP}(\Psi,\Delta) = \{ (x_1,\dots,x_m) \in \mathbb{R}^m : \exists \mbox{ dist.\ on } 
     \Omega\; { s.t. } \;\mathbb{E}[Q(\alpha_1,\omega)] = x_1 \wedge \dots \wedge \mathbb{E}[Q(\alpha_m,\omega)] = x_m \}.$
\end{definition}

Note that, for $\Psi = (\alpha_1, \dots, \alpha_m)$, the relational marginal polytope $\mathbf{RMP}(\Psi,\Delta)$ is just the convex hull of the set of points 
$\left\{ \left(n_1/|\Delta|^{|\textit{vars}(\alpha_1)|}, \dots, n_m/|\Delta|^{|\textit{vars}(\alpha_m)|}\right) \right.$ $\left. \left|  (n_1,\dots,n_m) \in \operatorname{Supp}(\Phi,\Omega) \right. \right\}$. 
where $\Omega$ is the set of all possible worlds over the domain~$\Delta$.

Therefore all we need to do is to compute the set $\operatorname{Supp}(\Psi,\Omega)$ and the rest is just a standard geometric problem of finding the convex hull of a set of points for which one can employ existing algorithms (e.g.\ \cite{barber1996quickhull}).

First, we notice that we can extract the count distribution $q_{\Phi,\Omega}$ of any MLN $\Phi$ using just WFOMC. In fact, with the material developed so far in this paper, this is an extremely easy thing to do. We just compute the DFT of the $q_{\Phi,\Omega}$ using WFOMC as in Section~\ref{sec:dftusingwfomc}, then convert the result back using inverse DFT and the result is the count distribution $q_{\Phi,\Omega}$ that we wanted to compute. To compute the set $\operatorname{Supp}(\Psi,\Omega)$ for a set $\Psi = \{\alpha_1,\dots,\alpha_m \}$, we can just compute the count distribution of the MLN $\Phi = \{(\alpha_1,0),\dots,(\alpha_m,0) \}$, which defines a uniform distribution over possible worlds, and select the points with non-zero probability (there are only polynomially many such points in the size of the domain $\Delta$). After that, we are done.

As a direct consequence of the above, we have the next theorem.

\begin{theorem}\label{thm:thm2}
Let $\Psi = \{\alpha_1,\dots,\alpha_m \}$ be a set of first-order logic formulas, $\Phi = \{(\alpha_1,w_1),\dots,(\alpha_m,w_m) \}$ be an MLN, $\Delta$ be a set of domain elements and $\Omega$ be the set of all possible worlds over the domain $\Delta$. Given a WFOMC oracle, both the relational marginal polytope $\mathbf{RMP}(\Psi,\Delta)$ and the count distribution $q_{\Phi,\Omega}(\mathbf{n})$ can be constructed in time polynomial in $|\Delta|$ using $|\Delta|^{\sum_{i=1}^m |\textit{vars}(\alpha_i)|}+1$ calls to the WFOMC oracle.
\end{theorem}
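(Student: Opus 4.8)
The plan is to reduce both objects to a single computation — that of the count distribution $q_{\Phi,\Omega}$ — and to obtain $q_{\Phi,\Omega}$ by first computing its discrete Fourier transform with the oracle and then inverting it numerically. Recall from Section~\ref{sec:dftusingwfomc} that $g_{\Phi,\Omega}(\mathbf{k}) = \frac{1}{Z}\sum_{\omega\in\Omega}e^{\langle\mathbf{w}-i2\pi\mathbf{k}/\mathbf{M},\,\mathbf{N}(\Phi,\omega)\rangle}$, where $\mathbf{M}=[|\Delta|^{|\textit{vars}(\alpha_1)|}+1,\dots,|\Delta|^{|\textit{vars}(\alpha_m)|}+1]$, and that the sum on the right is exactly the partition function of the one-component $\mathbb{C}$-MLN $\Phi_{\mathbf{k}}=\{(\alpha_1,[\mathbf{w}-i2\pi\mathbf{k}/\mathbf{M}]_1),\dots,(\alpha_m,[\mathbf{w}-i2\pi\mathbf{k}/\mathbf{M}]_m)\}$, hence is obtained with a single WFOMC call via~(\ref{eq:Z}). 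So first I would, for each $\mathbf{k}$ in the grid $\mathcal{D}$ of Section~\ref{sec:dftusingwfomc} (the same grid underlying $\mathbf{M}$), issue one oracle call to get the unnormalised value $G(\mathbf{k}):=Z\cdot g_{\Phi,\Omega}(\mathbf{k})$, together with one call for $Z$ itself (note in fact $G(\mathbf{0})=Z$, so this extra call can even be folded in). This is one oracle call per grid point plus one, i.e.\ the bound of the statement, and $|\mathcal{D}|=\prod_i(|\Delta|^{|\textit{vars}(\alpha_i)|}+1)$ is polynomial in $|\Delta|$ because $\Psi$, hence $m$ and every $|\textit{vars}(\alpha_i)|$, is fixed.

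Second, I would invert: since $\operatorname{Supp}(\Phi,\Omega)\subseteq\mathcal{D}$, applying the inverse DFT of Section~\ref{sec:fourier} to $(G(\mathbf{k}))_{\mathbf{k}\in\mathcal{D}}$ returns $Z\cdot q_{\Phi,\Omega}(\mathbf{n})$ for every $\mathbf{n}\in\mathcal{D}$, and dividing through by $Z$ recovers $q_{\Phi,\Omega}$ on its entire support. This stage uses no oracle calls, and its arithmetic is polynomial in $|\Delta|$: the grid $\mathcal{D}$ has polynomially many points, and by the remark in Section~\ref{sec:numerics} every complex number arising during WFOMC evaluation and during the (inverse) DFT has an $O(1)$-size representation of the allowed rational-exponential form, so intermediate quantities never blow up.

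For the relational marginal polytope, I would run the same procedure on the trivial MLN $\Phi=\{(\alpha_1,0),\dots,(\alpha_m,0)\}$, whose induced distribution is uniform over $\Omega$, so that $q_{\Phi,\Omega}(\mathbf{n})\neq 0$ precisely for $\mathbf{n}\in\operatorname{Supp}(\Psi,\Omega)$; reading off these points, rescaling the $i$-th coordinate of each by $|\Delta|^{|\textit{vars}(\alpha_i)|}$, and taking the convex hull (e.g.\ with QuickHull~\cite{barber1996quickhull}) yields $\mathbf{RMP}(\Psi,\Delta)$ by the characterisation recalled just after its definition. The convex-hull step costs no oracle calls and is polynomial in $|\Delta|$ because there are only polynomially many points and the ambient dimension $m$ is fixed; with all weights $0$ every number involved is an exact rational, so this computation is exact (for a general $\Phi$ the same procedure produces $q_{\Phi,\Omega}$ exactly when $\exp(w_i)$ is of the allowed form and to arbitrary precision otherwise).

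The genuinely new mathematics — that $\mathcal{F}\{q_{\Phi,\Omega}\}$ is a ratio of $\mathbb{C}$-MLN partition functions and that those reduce to WFOMC — is already established in Sections~\ref{sec:inference} and~\ref{sec:dftusingwfomc}, so what remains is mostly assembly and bookkeeping, and that is where I expect the only real care to be needed: charging exactly one oracle call per grid point (plus one for $Z$), and checking that the two purely numerical stages, inverting the DFT and computing the convex hull, both run in time polynomial in $|\Delta|$. The subtlest point is the exactness of the inverse-DFT stage under the restricted number representation, which is exactly what the representation-size remark of Section~\ref{sec:numerics} is there to supply.
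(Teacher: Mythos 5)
Your proposal is correct and follows essentially the same route as the paper: compute the DFT of $q_{\Phi,\Omega}$ on the grid $\mathcal{D}$ by treating each $Z\cdot g_{\Phi,\Omega}(\mathbf{k})$ as the partition function of the shifted $\mathbb{C}$-MLN $\Phi_{\mathbf{k}}$ (one WFOMC call each, plus one for $Z$), invert the DFT, and for the polytope run the same procedure on the zero-weight MLN, rescale the support points and take their convex hull. Your accounting is in fact slightly more careful than the theorem's stated bound (the grid has $\prod_i(|\Delta|^{|\textit{vars}(\alpha_i)|}+1)$ points rather than $|\Delta|^{\sum_i|\textit{vars}(\alpha_i)|}$, a lower-order discrepancy already present in the paper's own statement), and your attention to the restricted number representation for the inverse-DFT stage matches the paper's remarks in Section~\ref{sec:numerics}.
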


\begin{remark}
The algorithm from \cite{kuzelka.aistats.2020} needs $|\Delta|^{|\Psi| \cdot \sum_{i=1}^m |\textit{vars}(\alpha_i)|}$ calls to a WFOMC oracle. The presented method is therefore a substantial improvement.
\end{remark}

\begin{remark}
Naively, one would expect that computing the count distribution of an MLN requires to query the MLN using cardinality constraints (which are not supported by existing WFOMC systems) such as: ``compute the probability that $N(\alpha_1,\omega) = n_1$, $\dots$, $N(\alpha_m,\omega) = n_m$ holds in a possible world $\omega$ sampled from the MLN.'' So it might be a bit surprising at first that we can compute the count distribution using just WFOMC without any cardinality constraints  (albeit with complex weights). 
\end{remark}

\subsubsection{AN EXAMPLE}

Here we show a concrete example of counting distributions of an MLN. We use the ``friends and smokers'' MLN~\cite{Richardson2006}, which is given as $\Phi = \{(\textit{sm}(x), w_1), (\textit{sm}(x) \wedge \textit{fr}(x,y) \Rightarrow \textit{sm}(y), w_2) \}$. We set $w_1 = 0$ and $w_2 = 0$ and $\Delta = \{A_1,A_2,\dots,A_{10}\}$. This MLN models a uniform distribution over possible worlds. The resulting count distribution computed by the algorithm outlined in this section is shown in Figure \ref{fig:count_distribution_example}. While the distribution on possible worlds $p_{\Phi,\Omega}(\omega)$ is uniform, the corresponding count distribution $q_{\Phi,\Omega}(\mathbf{n})$ is obviously not.

\begin{figure}[t]
\includegraphics[width=1\linewidth]{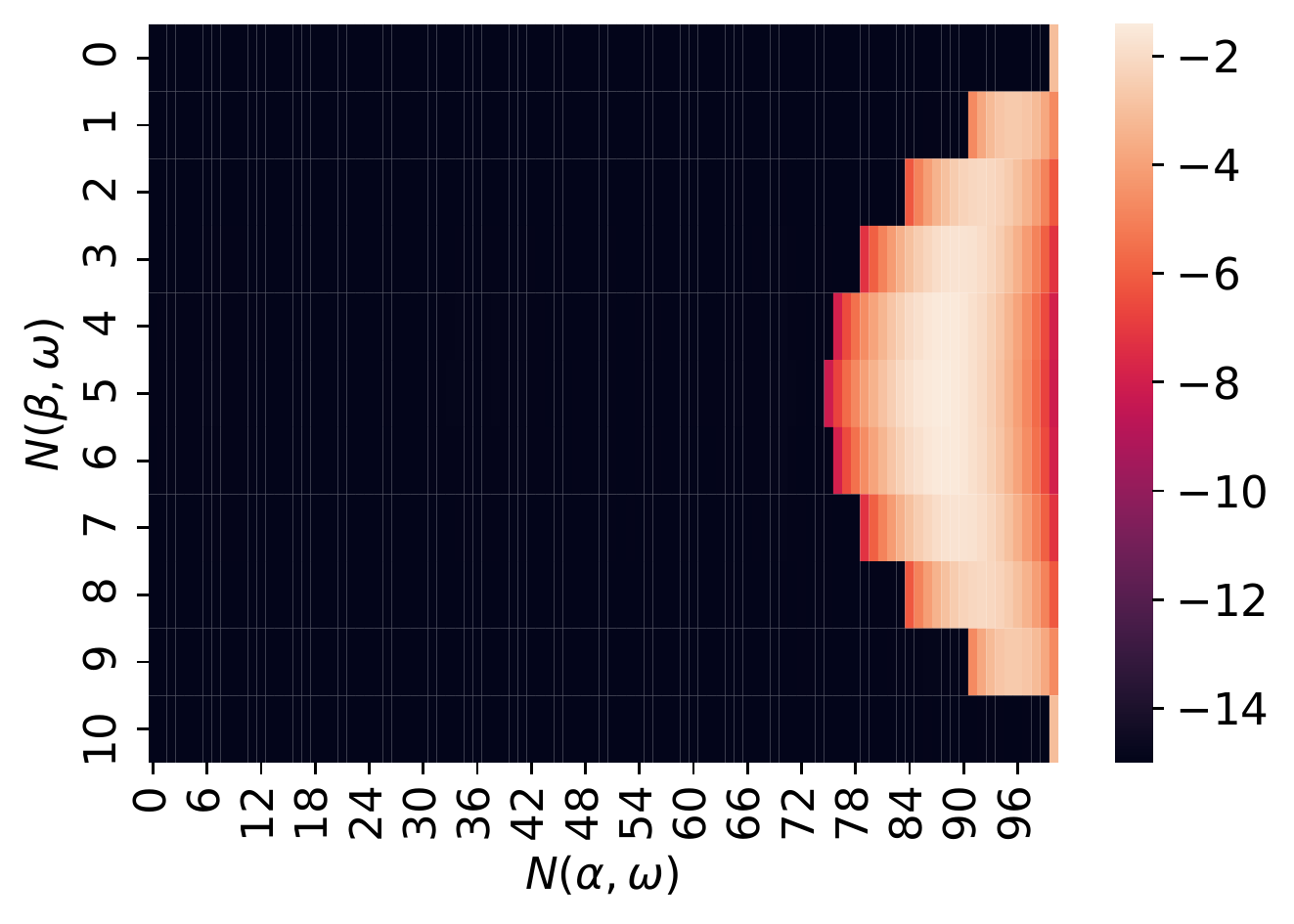}
\caption{The count distribution of the MLN $\Phi = \{(\textit{sm}(x), 0), (\textit{sm}(x) \wedge \textit{fr}(x,y) \Rightarrow \textit{sm}(y), 0) \}$. The x-axis corresponds to $N(\textit{sm}(x) \wedge \textit{fr}(x,y) \Rightarrow \textit{sm}(y),\omega)$ and the y-axis to $N(\textit{sm}(x),\omega)$. Shown in logarithmic scale.}\label{fig:count_distribution_example}
\end{figure}



\section{RELATED WORK}

We are not the first to study complex numbers in the context of statistical relational learning. Buchman and Poole \cite{DBLP:conf/uai/BuchmanP17} extended probabilistic logic programs (PLPs) with complex weights and showed that this leads to full expressivity. However, their notion of expressivity is different from the one we use here. Their motivation comes from an observation from their earlier work \cite{DBLP:conf/kr/BuchmanP16} where they showed that there are distributions that cannot be represented by any {\em relational} PLP without constants no matter how complicated the probabilistic rules defining it are. Their positive result from \cite{DBLP:conf/uai/BuchmanP17} uses PLPs in a special form and does not apply to probabilistic logic programs with a fixed set of rules. Besides the fact that we study MLNs and not PLPs, the main difference is that we focus on expressivity of fixed (complex) MLNs. Importantly, this allows us to broaden the class of distributions that can be modelled {\em efficiently}.

Another closely related contribution from the statistical relational learning literature is the work of Van den Broeck, Meert and Darwiche \cite{van2014skolemization} who used negative weights in order to ``skolemize'' first-order logic sentences with existential quantifiers, which was later generalized in \cite{meert2016relaxed}. This work is orthogonal to ours; we can use the Skolemization procedure to get rid of existential quantifiers when performing inference using WFOMC in a $\mathbb{C}$-MLN containing existential quantifiers.

\section{CONCLUSIONS}

We started by studying the expressivity of Markov logic networks (MLNs). After observing that MLNs are not (almost) fully expressive, we introduced {\em complex MLNs}. We then showed that, unlike their standard counterparts, complex MLNs are {\em fully expressive}. This has an important consequence for tractable probabilistic modelling. It means that we can reason about larger class of distributions in time polynomial in the domain size than what was previously known, but, importantly, using already existing lifted inference algorithms! 

Finally, after noticing that discrete Fourier transform of a count distribution can be computed using WFOMC with complex weights, we proposed an algorithm for computing count distributions and relational marginal polytopes using WFOMC oracles. We believe that being able to compute count distributions of MLNs efficiently will be useful for modelling population, ``class-level'' \cite{schulte}, statistics using MLNs.

\paragraph{Acknowledgements} This work was supported by Czech Science Foundation project ``Generative Relational Models'' (20-19104Y), the OP VVV project {\it CZ.02.1.01/0.0/0.0/16\_019/0000765} ``Research Center for Informatics'' and a donation from X-Order Lab.


\bibliographystyle{plain}
\bibliography{ijcai20}

\end{document}